\theoremstyle{plain}
\newtheorem{theorem}{Theorem}[section]
\newtheorem{proposition}[theorem]{Proposition}
\newtheorem{lemma}[theorem]{Lemma}
\theoremstyle{definition}
\theoremstyle{remark}
\pgfplotsset{compat=newest}
\DeclareMathOperator*{\argtopk}{\arg \textnormal{top-}k}
\DeclareMathOperator*{\R}{\mathbb{R}}
\DeclarePairedDelimiter\abs{\lvert}{\rvert}%
\DeclareMathOperator*{\argmax}{arg\,max}
\providecommand{\la}{\langle}
\providecommand{\ra}{\rangle}
\providecommand{\abs}[1]{\lvert #1 \rvert}
\providecommand{\A}{\mathcal{A}}
\providecommand{\alg}[1]{\normalfont{\texttt{#1}}}
\providecommand{\N}{\mathcal{N}}
\newcommand{\1}{\mathbb{1}}
\definecolor{scattercolor1}{HTML}{800080}
\definecolor{scattercolor2}{HTML}{990000}
\definecolor{scattercolor3}{HTML}{009900}
\definecolor{scattercolor4}{HTML}{808000}
\definecolor{scattercolor5}{HTML}{D30E78}
\definecolor{scattercolor6}{HTML}{036ffc}
\definecolor{scattercolor7}{HTML}{eb9800}
\icmltitlerunning{Clustering Fully connected Graphs by Multicut}
\begin{document}
\twocolumn[
\icmltitle{ClusterFuG: Clustering Fully connected Graphs by Multicut}



\icmlsetsymbol{equal}{*}

\begin{icmlauthorlist}
\icmlauthor{Ahmed Abbas}{mpi}
\icmlauthor{Paul Swoboda}{mpi,uni_mann}
\end{icmlauthorlist}

\icmlaffiliation{mpi}{MPI for Informatics, Saarland Informatics Campus, Germany}
\icmlaffiliation{uni_mann}{University of Mannheim, Germany}

\icmlcorrespondingauthor{Ahmed Abbas}{ahmed.abbas@mpi-inf.mpg.de}

\icmlkeywords{clustering, multicut, combinatorial optimization}

\vskip 0.3in
]


\printAffiliationsAndNotice{}  

\begin{abstract}
We propose a graph clustering formulation based on multicut (a.k.a.\ weighted correlation clustering) on the complete graph.
Our formulation does not need specification of the graph topology as in the original sparse formulation of multicut, making our approach simpler and potentially better performing.
In contrast to unweighted correlation clustering we allow for a more expressive weighted cost structure.
In dense multicut, the clustering objective is given in a factorized form as inner products of node feature vectors.
This allows for an efficient formulation and inference in contrast to multicut/weighted correlation clustering, which has at least quadratic representation and computation complexity when working on the complete graph.
We show how to rewrite classical greedy algorithms for multicut in our dense setting and how to modify them for greater efficiency and solution quality.
In particular, our algorithms scale to graphs with tens of thousands of nodes.
Empirical evidence on instance segmentation on Cityscapes and clustering of ImageNet datasets shows the merits of our approach.
\end{abstract}
\section{Introduction}
\label{sec:introduction}
Graph-based clustering approaches, primarily among them multicut~\cite{chopra1993partition}, are theoretically appealing: 
They do not need specification of the number of clusters, but infer them as part of the optimization process.
They allow for a flexible clustering objective with attractive and repulsive costs between pairs of nodes.
They are also theoretically well-understood as optimization problems with intensively studied polyhedral descriptions. 
Efficient solvers that scale well and give high quality solutions have also been developed.

As a drawback, graph-based clustering approaches need specification of the underlying graph topology.
In practice, this means an additional engineering effort as well as the possibility to not get it right, which would decrease the downstream task performance.
Naively circumventing this challenge by using the complete graph is not scalable -- the number of edges grows quadratically.
One approach to resolve this conundrum is graph structure learning e.g., by extending~\cite{kazi2022differentiable}, but adds considerable additional complexity.

We propose a method to solve graph clustering efficiently on complete graphs.
Our formulation will use the well-known edge-based multicut formulation and only restrict the way edge costs can be computed: they need to be based on inner products of node features.
This has two advantages:
First, it reduces storage requirements.
Instead of storing a full adjacency matrix of edge costs as in multicut, which grows quadratically with the number of nodes, we only need to store a linear number of node features and can compute edge costs on demand.
Second, operations needed in multicut algorithms can be made scalable.
Instead of operating on the complete graph we can sparsify it adaptively during the solving process.
This allows to simulate the workings of multicut algorithms on complete graphs by working on a small subset of it.
The key technical ingredient to obtain these sparse subgraphs will be fast nearest neighbor search, for which efficient and scalable implementations exist~\cite{johnson2019_faiss}.
In effect, this allows us to solve large dense multicut instances in moderate time, which is not possible with existing solvers. In detail, our contribution is as follows:
\begin{description}
\item[Formulation:] We propose multicut on complete graphs with factorized edge costs as an efficiently representable graph clustering formalism.
\item[Algorithm:] We propose scalable algorithms for solving the dense multicut problems, one mimicking exactly the original greedy additive edge constraction (GAEC) algorithm~\cite{keuper2015efficient}, the other a more efficient variant in the spirit of the balanced edge contraction heuristic~\cite{kardoost2018solving}.
\item[Empirical:] We show efficacy in terms of memory and runtime of our solvers and show the merit of using them for image segmentation on Cityscapes and clustering of ImageNet classification dataset. Our implementation is available at \url{https://github.com/aabbas90/cluster-fug}.
\end{description}

\section{Related work}
\label{sec:related-work}

\paragraph{Multicut and correlation clustering:}
The original multicut problem is formulated as an extension of the min-cut problem to multiple terminals with non-negative edge costs~\cite{hu1963multicut}.
In machine learning the multicut problem is defined differently and is equivalent (up to variable involution) to the correlation clustering problem~\cite{demaine2006correlation}, i.e.\ arbitrary edges costs and no terminals.
For the purpose of this work we will use the latter definition of multicut.
The polyhedral geometry of the multicut problem has been studied in~\cite{deza1992clique,chopra1993partition,oosten2001clique}. An equivalent problem for multicut on complete graphs is the clique partitioning problem studied in~\cite{grotschel1990facets}.

Although the multicut problem is NP-Hard~\cite{bansal2004correlation, demaine2006correlation}, greedy algorithms perform well in practice for computer vision and machine learning tasks~\cite{keuper2015efficient,levinkov2017comparative,bailoni2022gasp}.
More involved algorithms include message passing in the dual domain for multicut, studied in~\cite{swoboda2017message, lange2018partial, rama_22}.
These algorithms give lower bounds and improved primal solutions.
Another line of efficient primal heuristics is based on move-making~\cite{beier2014cut,beier2015fusion}.
All these graphs, while efficient, scale with the number of edges, making them unsuitable for very large dense graphs.
Algorithms for correlation clustering on complete graphs were proposed in~\cite{pan15corr_cluster_big_graphs, veldt2022correlation_clustering}.
However, they only allow unweighted edges.
In this paper we consider efficient algorithms on full graphs and with weighted edges.

\paragraph{$K$-Means:}
The $K$-means problem~\cite{lloyd_kmeans} is similar to our approach in that it works directly on feature representations and its objective is based on $L_2$-distances between features.
Similarly to our algorithm, large number of points are handled by efficiently computing kNN-graphs~\cite{qaddoura2020efficientkmeans_nn}, thereby reducing run time.
In contrast to multicut, the number of clusters must be given a-priori, while in multicut it is derived as part of the optimization process.

\paragraph{Other clustering approaches:}
There are a number of other paradigms for clustering.
A prominent approach is spectral clustering, in which a weighted graph is given and a clustering is computed with the help of the eigenvectors of the graph Laplacian~\cite{von07tutorial_spectral_clustering, jia14spectral_clustering_survey}.
The work~\cite{dhillon07graculus} shows connections between weighted $k$-means and multiple spectral clustering approaches.
As for K-means and unlike multicut, spectral clustering requires the number of clusters to be specified.

\section{Method}
\label{sec:method}

A \emph{decomposition (or clustering)} of a weighted graph $G = (V, E, c)$ with vertices $V$, edges $E$ and edge costs $c \in \R^{E}$ can be obtained by solving the following multicut problem 
\begin{equation}
  \label{eq:multicut}
  \min_{y \in \mathcal{M}_G} \sum_{ij \in E} c_{ij}y_{ij}.
\end{equation}
We say that an edge $ij$ with $c_{ij} > 0$ is \emph{attractive}. Its endpoints prefer to be in the same cluster.
In the opposite case $c_{ij} < 0$ we call the edge \emph{repulsive}.
The set $\mathcal{M}_G$ enumerates all possible partitions of $G$ defined as 
\begin{equation}
  \mathcal{M}_G = \left\{ \1_{\delta(V_1,\ldots,V_n)} : 
\begin{array}{c}
  n \in \mathbb{N} \\
  V_i \cap V_j = \varnothing \quad \forall i\neq j\\
  V_1 \dot\cup \ldots \dot\cup V_n = V
\end{array} \right\}\,.
\end{equation} 
where $\delta(\cdot,\ldots,\cdot) \subseteq E$ is the set of edges straddling distinct components and $\1_\delta$ is the indicator vector of $\delta$. 
\begin{figure}
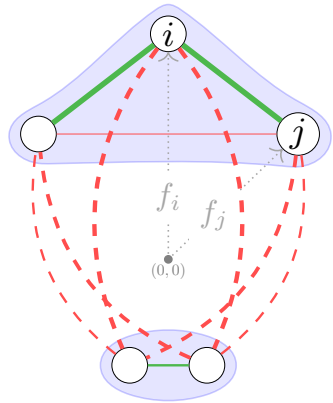

    \centering
    \scalebox{1.5}{\includestandalone{figures/dense_multicut_example_3}}
    \caption{Example illustration of dense multicut problem~\eqref{eq:dense_multicut} on $5$ nodes. Each node $i$ is associated with a vector $f_i \in \R^2$ and all possible edges between distinct nodes are considered (i.e., the complete graph). The edge cost between a pair of nodes $i$, $j$ is measured by $\langle f_i, f_j \rangle$ and attractive/repulsive edges are colored \textcolor{green}{green}/\textcolor{red}{red}. Edge thickness represents absolute edge cost. Also shown is the optimal partitioning to $2$ clusters with cut edges denoted by dashed lines.}
    \label{fig:complete-multicut-illustration}
\end{figure}

The goal of our work is to consider the scenario when the graph $G$ is complete i.e., $E = \{ij : i \in V, j \in V\setminus \{i\} \}$. For large graphs storage and processing of edge costs $c$ becomes prohibitive. To address this issue we instead require as input a feature vector $f_i \in \R^d$ for each node $i$ in $V$. The edge costs between a pair of nodes $i$ and $j$ can then be measured on-demand through some function $s(f_i, f_j) \rightarrow \R$. In this case the multicut problem becomes
\begin{equation}
  \label{eq:dense_multicut}
  \min_{y \in \mathcal{M}_G} \sum_{i \in V} \sum_{j \in V \setminus i} s(f_i, f_j) y_{ij},
\end{equation}
which we term as dense multicut problem. An illustration of our formulation is given in Figure~\ref{fig:complete-multicut-illustration}. 
In the following we first revisit an algorithm to approximately solve~\eqref{eq:multicut} and show its extensions for dense multicut problem~\eqref{eq:dense_multicut}. 

\subsection{Greedy Additive Edge Contraction}
The greedy additive edge contraction (GAEC) scheme~\cite{keuper2015efficient} computes approximate solution of the multicut problem~\eqref{eq:multicut} as given in Algorithm~\ref{alg:gaec}. It initializes each node as a separate cluster and iteratively contracts a pair of nodes $i$, $j$ with the largest non-negative cost $c_{ij}$ (if it exists).
Let $m$ be the node $i$ and $j$ are contracted to.
The edge costs of edges incident to $m$ are
\begin{equation}
\label{eq:contraction_gaec}
c_{ml} = c_{il} + c_{jl}, l \in \N_i \cup \N_j \setminus \{i, j\},
\end{equation}
where costs of non-existing edges are assumed to be $0$ and $\N_i$ corresponds to neighbours of $i$ in graph $G$.
For complete graphs directly applying this algorithm by operating on edge costs is computationally expensive. 
Moreover, since each node is connected to all other nodes ($\N_i = V \setminus \{i\}$), cost updates~\eqref{eq:contraction_gaec} during edge contraction take $\mathcal{O}(\abs{V})$ instructions.

\begin{algorithm2e}
\newcommand\mycommfont[1]{\footnotesize\ttfamily\textcolor{blue}{#1}}
\SetCommentSty{mycommfont}
\caption{\texttt{GAEC~\cite{keuper2015efficient}}}
\label{alg:gaec}
\KwData{Weighted graph $G = (V, E, c)$}
\KwResult{
Clusters $V$
}
\While{$\max_{uv \in E} c_{uv} \geq 0$}
{
$m \coloneqq ij = \argmax_{uv \in E} c_{uv}$\;
\tcp{Aggregate edge costs}
$c_{ml} = c_{il} + c_{jl},\, l \in \N_i \cup \N_j \setminus \{i, j\}$
\;\label{alg:aggregate_edge_costs}
\tcp{Update edges}
$ E' = \{ml \vert l \in \N_i \cup \N_j \setminus \{i, j\}\}$\; 
$ E = E' \cup E \setminus \{il\}_{l \in \N_i} \cup \{jl\}_{l \in \N_j} $\;
\tcp{Update nodes}
$ V = (V \cup \{m\}) \setminus \{i, j\}$ \; 
}
\end{algorithm2e}

\paragraph{Contraction on complete graphs:}
We show how to perform a more efficient (and equivalent) contraction by operating on the node features $f$ by our formulation~\eqref{eq:dense_multicut} for the particular case of $s(\cdot, \cdot)$ defined as
\begin{equation}
    \label{eq:inner_product_similarity}
    s(f_i, f_j) = \la f_i, f_j \ra.
\end{equation}
From now on, unless stated otherwise, our edge costs will be given by~\eqref{eq:inner_product_similarity}.

\begin{lemma}[Contraction with node features]
\label{lemma:contraction_node_features}
Assume edge costs are measured by~\eqref{eq:inner_product_similarity} and nodes $i$ and $j$ are contracted to $m$.
Then features of node $m$ given by
\begin{equation}
    \label{eq:features_aggregation}
    f_m = f_i + f_j
\end{equation}
produce contracted edge costs according to~\eqref{eq:contraction_gaec}.
\end{lemma}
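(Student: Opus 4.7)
The plan is extremely short because the claim follows immediately from bilinearity of the inner product. I would first note that since the complete graph has all possible edges, the neighbourhood condition $l \in \N_i \cup \N_j \setminus \{i,j\}$ reduces to $l \in V \setminus \{i,j\}$, so there is no need to worry about the ``non-existing edges have cost $0$'' convention stated after \eqref{eq:contraction_gaec}; every pairwise cost is genuinely present and equals an inner product of features.

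Next, I would take an arbitrary $l \in V \setminus \{i,j\}$ and simply unfold the definitions. Using \eqref{eq:inner_product_similarity} and the proposed aggregation rule \eqref{eq:features_aggregation}, I would compute
\begin{equation*}
c_{ml} \;=\; \langle f_m, f_l \rangle \;=\; \langle f_i + f_j, f_l \rangle \;=\; \langle f_i, f_l\rangle + \langle f_j, f_l\rangle \;=\; c_{il} + c_{jl},
\end{equation*}
which is exactly the GAEC update \eqref{eq:contraction_gaec}. The only ``step'' here is linearity of $\langle \cdot, \cdot\rangle$ in its first argument.

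There is no real obstacle to the proof; the lemma is essentially a restatement of bilinearity. The only subtlety worth mentioning explicitly is that after contraction, the feature-based representation is preserved, i.e.\ the contracted graph still has edge costs that are inner products of the (now updated) node features. This means one can iterate the argument: a further contraction step producing $f_{m'} = f_m + f_k$ will again yield the correct additive cost update, so the equivalence with GAEC persists throughout the whole run of the algorithm. I would include a short remark to this effect so that subsequent sections can freely invoke the lemma at every contraction step.
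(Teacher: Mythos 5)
Your proof is correct and is essentially identical to the paper's: both reduce the claim to linearity of the inner product in its first argument, computing $\langle f_i + f_j, f_l\rangle = \langle f_i, f_l\rangle + \langle f_j, f_l\rangle$ and matching this with the GAEC update rule. Your added remarks on the complete-graph neighbourhood and on iterating the argument across contractions are sensible but not needed beyond what the paper states.
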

\begin{proof}
By applying~\eqref{eq:inner_product_similarity} for $l \in V$ and comparing with~\eqref{eq:contraction_gaec} we get
\begin{multline*}
    s(f_m, f_l) = \la f_m, f_l \ra 
    = \la f_i, f_l \ra + \la f_j, f_l \ra \\
    = s(f_i, f_l) + s(f_j, f_l)\,. \qedhere
\end{multline*}
\end{proof}

Next we will build on the previous result to devise heuristics for solving dense multicut problem~\eqref{eq:dense_multicut} efficiently.
\paragraph{GAEC for complete graphs:}
We devise an algorithm which exactly imitates GAEC~\cite{keuper2015efficient} but is applicable to our formulation on complete graphs~\eqref{eq:dense_multicut}.
Specifically to make GAEC efficient with node features and a complete graph, we sparsify the original graph $G$ by working on its directed $k$-nearest neighbours (NN) graph $(V, \A)$.
The NN graph stores candidate edges for contraction.
The arc set $\A$ is populated by nearest neighbour search w.r.t.\ feature similarity~\eqref{eq:inner_product_similarity} and is updated on each edge contraction.
We denote outgoing neighbours of $i$ as $\N^+_i = \{l \vert (i, l) \in \A\}$ and similarly $\N^-_i$ as incoming neighbours.
We write $\N_i^+ \cup \,\N_j^+$ as $\N^+_{ij}$ and similarly for incoming neighbours. 
Lastly the set $\argtopk_{s \in S} g(s)$ contains the $k$ elements of $S$ having the largest values of $g(s)$.
The complete strategy to obtain a feasible solution of dense multicut problem is described in Algorithm~\ref{alg:dense-multicut-gaec}.
It imitates Algorithm~\ref{alg:gaec} by iteratively searching and contracting the most attractive edge, but it restricts its search only to the NN graph thereby reducing computation. After contraction, the NN graph is updated (lines~\ref{alg:add_merged_node}-\ref{alg:dense_gaec_contraction_complete}) by only recomputing nearest neighbors of nodes which were affected by the contraction in the NN graph.
\begin{algorithm2e}
\newcommand\mycommfont[1]{\footnotesize\ttfamily\textcolor{blue}{#1}}
\SetCommentSty{mycommfont}
\caption{\texttt{Dense GAEC}}
\label{alg:dense-multicut-gaec}
\KwData{Node features $f_i, \forall i \in V$; Number of nearest neighbours $k$}
\KwResult{
Clusters $V$
}

\tcp{Find nearest neighbours of each node}
$\A = \{(i, j) \vert i \in V, j \in \argtopk_{i' \neq i} \la f_i, f_{i'} \ra \}$ \;\label{alg:initial_nn_search}
\While{$\max_{(u, v) \in \A} \la f_u, f_v \ra \geq 0$}
{
$m \coloneqq (i, j) = \arg\max_{(u, v) \in \A} \la f_u, f_v \ra$\;
\tcp{Update nodes}
$f_m = f_i + f_j$\;\label{alg:aggregate_features}
$ V = (V \cup m) \setminus \{i, j\}$ \;\label{alg:add_merged_node}
\tcp{Update nodes having $i, j$ as NN}
$H = \{(q, r) \vert q \in \N_{ij}^-, r \in \argtopk_{l \in V \setminus q} \la f_q, f_l\ra\}$\;\label{alg:gaec_update_nn_q}
\tcp{NN of merged node}
$H = H \cup \{(m, r) \vert r \in \argtopk_{l \in V \setminus m} \la f_m, f_l\ra\}$\;\label{alg:gaec_update_nn_m}
\tcp{Add arcs and remove arcs with $i, j$}
$\A = (\A \cup H) \setminus (\{(\cdot, i)\} \cup \{(\cdot, j)\} \cup \{(i, \cdot)\} \cup \{(j, \cdot)\})$\;
\label{alg:dense_gaec_contraction_complete}
}
\end{algorithm2e}

\begin{proposition}[Dense Greedy Contraction]
\label{prop:dense_greedy_contraction}
Algorithm~\ref{alg:dense-multicut-gaec} always merges a pair of nodes $i$ and $j$ with the largest edge cost i.e.,
\begin{equation}
\label{eq:arcset_optimal}
(i, j) \in \argmax_{(u, v) \in \A} \la f_u, f_v \ra \implies \la f_i, f_j \ra \geq \max_{u, v \neq u} \la f_u, f_v \ra.
\end{equation}
\begin{proof}
The statement is trivially satisfied before any merge operation is performed since $\A$ is constructed by nearest neighbour search over all nodes in line~\ref{alg:initial_nn_search} of the algorithm. 
We now show that after each merge operation (i.e., after line~\ref{alg:dense_gaec_contraction_complete} of the algorithm) the statement~\eqref{eq:arcset_optimal} still holds. 
We define $Q = m \cup \N_{ij}^-$. Two cases can arise:
\paragraph{Case 1: $\{i, j\} \cap Q \neq \varnothing$:}
Due to nearest neighbour search for all nodes in $Q$ at lines~\ref{alg:gaec_update_nn_q} and~\ref{alg:gaec_update_nn_m}, the statement holds.
\paragraph{Case 2: $\{i, j\} \cap Q = \varnothing$:}
In this case if $i$ is the contracted node $m$ from the last edge contraction operation then $(i, j) \in \A$ due to line~\ref{alg:gaec_update_nn_q}. 
If $i \neq m$ then it remains connected to its nearest neighbours either due to the initial NN search at line~\ref{alg:initial_nn_search} or the NN update at lines~\ref{alg:gaec_update_nn_q} and~\ref{alg:gaec_update_nn_m}.
\end{proof}
\end{proposition}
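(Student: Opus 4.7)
The plan is to proceed by induction on the number of contractions performed, maintaining the following invariant at the start of every iteration: for every current node $u$, there exists an arc $(u, v) \in \A$ with $v \in \argmax_{w \in V \setminus \{u\}} \la f_u, f_w \ra$. This invariant directly implies the proposition, because the globally optimal pair is then guaranteed to be realised by some arc in $\A$, and so the $\argmax$ over $\A$ picks it out.

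The base case follows because line~\ref{alg:initial_nn_search} performs a $k$-NN search for every node over all of $V \setminus \{u\}$, so in particular the top-$1$ NN is an outgoing arc for every $u$. For the inductive step, after contracting $(i,j)$ into $m$, I would partition the post-contraction vertex set $V' = (V \cup \{m\}) \setminus \{i,j\}$ into three groups: (a) $\{m\}$, (b) $\N_{ij}^-$, and (c) the remaining nodes. On group (a) the invariant holds by the explicit NN search of line~\ref{alg:gaec_update_nn_m}; on group (b) it holds by the recomputation in line~\ref{alg:gaec_update_nn_q}; and for $q$ in group (c), since $i$ and $j$ were not in $q$'s previous top-$k$ list, deleting them from the search space does not alter $q$'s best surviving NN in $V' \setminus \{m\}$, and the corresponding arc was not removed in line~\ref{alg:dense_gaec_contraction_complete}.

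The main obstacle is that in group (c) the newly created node $m$ could in principle be a better NN of $q$ than $q$'s preserved top-$1$, and the algorithm does not refresh $q$. I would close this gap by arguing that this staleness cannot undermine the \emph{global} argmax: let $(u^\star, v^\star)$ be the new global argmax pair in $V'$. If either endpoint equals $m$, line~\ref{alg:gaec_update_nn_m} secures a qualifying arc from $m$ to the other endpoint. Otherwise both endpoints lie in $V \setminus \{i,j\}$, and letting $w$ be $u^\star$'s previous top-$1$ NN (in $\A$ by the inductive hypothesis), I would case-split on whether $w \in \{i,j\}$ or not. In the former case $u^\star \in \N_{ij}^-$ and line~\ref{alg:gaec_update_nn_q} recomputes its NN. In the latter case $w$ survives into $V'$ with unchanged score, so $u^\star$'s top-$1$ in $V' \setminus \{u^\star\}$ must be either $w$ or $m$; having excluded $v^\star = m$, optimality of $(u^\star, v^\star)$ forces $v^\star = w$, and the arc $(u^\star, w) = (u^\star, v^\star)$ was preserved through line~\ref{alg:dense_gaec_contraction_complete}.
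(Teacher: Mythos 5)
Your proof is correct and follows the same overall strategy as the paper's: induction over contractions with a case analysis on where the new globally optimal pair sits relative to the nodes whose neighbour lists were refreshed (the merged node $m$ and $\N_{ij}^-$). The substantive difference is that you explicitly isolate and repair the weak point that the paper's Case~2 glosses over: a node $q \notin \N_{ij}^- \cup \{m\}$ keeps only its old arcs, so the stated per-node invariant (``every node has an arc to its true nearest neighbour'') genuinely fails when $m$ becomes $q$'s new best partner. Your fix --- observing that in that situation symmetry of $\la \cdot,\cdot \ra$ makes $q$ the top neighbour of $m$, so line~\ref{alg:gaec_update_nn_m} supplies the arc $(m,q)$ in the reverse direction, and the claim only needs the optimal \emph{value} to be realised by some arc in $\A$ --- is exactly the missing ingredient; the paper's Case~2 instead appeals vaguely to arcs surviving from earlier searches (and its phrasing ``if $i$ is the contracted node $m$'' is inconsistent with the case hypothesis $\{i,j\}\cap Q=\varnothing$, since $m\in Q$). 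One minor point to tidy: in your final subcase, ties may prevent you from literally concluding $v^\star = w$, but $\la f_{u^\star}, f_w\ra = \la f_{u^\star}, f_{v^\star}\ra$ holds, which is all the proposition requires.
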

Note that the claim of Prop.~\ref{prop:dense_greedy_contraction} does not ensure that the arcset $\A$ will contain all nearest neighbour arcs after contraction. Instead it guarantees that the most attractive edge will always be present in the nearest neighbour graph, foregoing the need to search in the complete graph.  
This proves that the Algorithm~\ref{alg:dense-multicut-gaec} performs locally optimal merges as proposed in~\cite{keuper2015efficient} and is also scalable to large complete graphs. As a downside the algorithm requires costly nearest neighbour search after every edge contraction. Since computing nearest neighbours and contracting edges is not commutative, in the worst case one has to recompute the nearest neighbours on the contracted graph from scratch. 

\paragraph{Incremental nearest neighbours:} For faster nearest neighbour updates after edge contraction we show how to reuse more of the previously computed nearest neighbors through the following two approaches. 
First, for all nodes whose nearest neighbours are merging nodes (i.e., line~\ref{alg:gaec_update_nn_q} of Alg.~\ref{alg:dense-multicut-gaec}), we check if merged node $m$ is already a nearest neighbour without requiring exhaustive search. 
Specifically assume a contracting node $i$ was a $k$-nearest neighbour of some other node $q \in V \setminus i$.
Then the merged node $m$ is a $k$-nearest neighbour of $q$ if $\la f_q, f_m \ra \geq \min_{l \in \N_q^+}\la f_q, f_l \ra$.
This check can be cheaply performed for all such nodes thereby reducing computation.
Second, we devise a criterion which can allow to efficiently populate nearest neighbours of the contracted node $m$. 
\begin{proposition}[Incremental nearest neighbours]
Let the $k$-nearest neighbours $\N_i^+, \N_j^+$ of nodes $i$ and $j$ be given.
Assume that nodes $i$, $j$ are merged to form a new node $m$.
Then edge costs between nodes $v \in V \setminus \N_{ij}^+$ and $m$ are bounded from above by
\begin{equation*}
    b_{ij} \coloneqq \min_{p \in \N_i^+} \la f_i, f_p\ra + \min_{q \in \N_j^+} \la f_j, f_q\ra
\end{equation*}
\label{prop:incremental_nn_bound}
\end{proposition}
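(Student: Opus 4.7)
The plan is to unpack both sides of the claimed inequality using linearity of the inner product and the defining property of the top-$k$ nearest-neighbour set, and then combine them.

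First, I would invoke Lemma~\ref{lemma:contraction_node_features} to write $f_m = f_i + f_j$, so that for any candidate node $v$,
\begin{equation*}
\la f_v, f_m \ra = \la f_v, f_i \ra + \la f_v, f_j \ra,
\end{equation*}
by bilinearity of $\la \cdot, \cdot\ra$. This reduces the problem to bounding each of the two inner products on the right-hand side separately.

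Next I would use the definition of $\N_i^+$ as the $\argtopk$ of $\la f_i, f_\cdot \ra$. The key observation is that if $v \notin \N_i^+$, then $\la f_v, f_i \ra$ is not among the $k$ largest values of $\la f_i, f_l \ra$ over $l \neq i$, hence it must be at most the smallest value attained on $\N_i^+$, namely $\min_{p \in \N_i^+} \la f_i, f_p \ra$. The assumption $v \in V \setminus \N_{ij}^+$ means precisely that $v \notin \N_i^+$ and $v \notin \N_j^+$, so the analogous bound holds for the $j$-term as well. Summing the two bounds yields exactly $b_{ij}$, which is the claim.

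Conceptually there is no real obstacle: the proposition is essentially a two-line consequence of (i)~the additivity of features under contraction and (ii)~the elementary fact that anything outside the top-$k$ is bounded by the $k$-th largest value. The only thing to be careful about is the edge case where $v$ coincides with $i$ or $j$ themselves; but after contraction those nodes are removed from $V$, so $v$ ranges over the surviving node set and the minima over $\N_i^+, \N_j^+$ are well defined (since $k \geq 1$). I would mention this briefly and then state the two-line calculation.
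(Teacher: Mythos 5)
Your proposal is correct and follows essentially the same route as the paper's own proof: decompose $\la f_m, f_v\ra = \la f_i, f_v\ra + \la f_j, f_v\ra$ via Lemma~\ref{lemma:contraction_node_features}, bound each term by the minimum over the respective top-$k$ set using the defining property of nearest-neighbour search, and sum. Your brief attention to the edge case $v \in \{i,j\}$ is a minor addition the paper omits, but the substance of the argument is identical.
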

\begin{proof}
Since neighbours of $i$ are computed by nearest neighbours search we have for all nodes $p' \notin \N_i^+$
\begin{equation*}
   \la f_i, f_{p'} \ra \leq \min_{p \in \N_i^+} \la f_i, f_p \ra,
\end{equation*}
and similarly for node $j$. Then by definition of $v$ and Lemma~\ref{lemma:contraction_node_features} we obtain
\begin{align*}
    \la f_m, f_v \ra &= \la f_i, f_v \ra + \la f_j, f_v \ra \\
    &\leq \min_{p \in \N_i^+} \la f_i, f_p\ra + \min_{q \in \N_j^+} \la f_j, f_q\ra\,. \qedhere
\end{align*}
\end{proof}

\begin{figure}
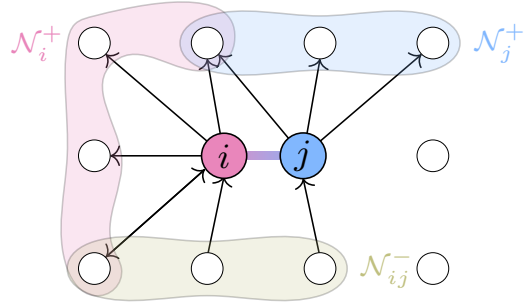

    \centering
    \scalebox{1.5}{\includestandalone{figures/knn_graph}}
    \caption{Illustration of nearest neighbour graph and an edge $ij$ being contracted.
    The set $\N_{ij}^+ = \N_i^+ \cup \N_j^+$ is searched first to find nearest neighbours of the merged node efficiently (Prop.~\ref{prop:incremental_nn_bound}).
    The nodes in set $\N_{ij}^-$ need to update their nearest neighbours since their current nearest neighbour nodes $i$, $j$ are getting contracted.
    Only the arcs to/from $i$, $j$ are shown. }
    \label{fig:knn-graph-illustration}
\end{figure}

The above proposition gives an upper bound of feature similarity (i.e., edge cost) of merged node $m$ with all nodes not in $\N_{ij}^+$. Thus if a node in $\N_{ij}^+$ exceeds this upper bound it is more similar to $m$ than all nodes not in $\N_{ij}^+$. This allows to possibly skip recomputing the nearest neighbors of $m$ in Alg.~\ref{alg:dense-multicut-gaec} (line~\ref{alg:gaec_update_nn_m}). 
\begin{lemma}
If
\begin{equation}
\abs{ \{p \in \N_{ij}^+ : \la f_m, f_p \ra \geq b_{ij} \}} \geq k
\end{equation}
then $k$-nearest neighbour of node $m$ given by $\argtopk_{v \in V \setminus \{i, j, m\}} \la f_m, f_v \ra $ can be chosen as  
$\argtopk_{p \in \N_{ij}^+} \la f_m, f_p \ra $.
\begin{proof}
Since the elements of $\N_{ij}^+$ already satisfy the bound $b_{ij}$ from Prop.~\ref{prop:incremental_nn_bound} and there are at least $k$ many such elements, the $k$-nearest neighbours of node $m$ can be taken from $\N_{ij}^+$. 
\end{proof}
\end{lemma}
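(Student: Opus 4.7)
The plan is to invoke Proposition~\ref{prop:incremental_nn_bound} to reduce the global top-$k$ search over $V\setminus\{i,j,m\}$ to a local search over $\N_{ij}^+$, using $b_{ij}$ as the threshold that separates ``inside'' candidates from ``outside'' ones.

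First I would fix notation: let $S := \argtopk_{p \in \N_{ij}^+} \la f_m, f_p\ra$ denote the local candidates, and let $T := V \setminus (\N_{ij}^+ \cup \{i,j,m\})$ denote the ``outside'' nodes to which Prop.~\ref{prop:incremental_nn_bound} applies. By that proposition, every $v \in T$ satisfies $\la f_m, f_v\ra \leq b_{ij}$.

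Next I would use the hypothesis of the lemma, namely $\lvert\{p \in \N_{ij}^+ : \la f_m, f_p\ra \geq b_{ij}\}\rvert \geq k$. This guarantees that the $k$ elements of $S$ (the local top-$k$ within $\N_{ij}^+$) each achieve similarity $\geq b_{ij}$ with $m$, since $S$ takes the $k$ largest values in $\N_{ij}^+$ and there are already $k$ members of $\N_{ij}^+$ meeting the threshold. Combining with the previous step, every element of $S$ dominates every element of $T$ in inner product with $f_m$. Hence no element of $T$ can displace an element of $S$ from the global top-$k$ over $V \setminus \{i,j,m\}$, and among the remaining candidates (all of which lie in $\N_{ij}^+$), $S$ is top-$k$ by construction. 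Therefore $S$ is a valid choice for $\argtopk_{v \in V \setminus \{i,j,m\}} \la f_m, f_v\ra$.

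There is essentially no obstacle here; the lemma is a direct bookkeeping consequence of Prop.~\ref{prop:incremental_nn_bound}. The only subtle point I would be careful about is the possible non-uniqueness of $\argtopk$: ties between an element of $S$ and an element of $T$ at the value $b_{ij}$ are permissible precisely because the statement says $S$ ``can be chosen as'' a top-$k$ set, i.e.\ it is one valid selection, not necessarily the unique one. Phrasing the conclusion this way sidesteps any tie-breaking concerns and makes the two-line argument complete.
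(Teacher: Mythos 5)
Your proposal is correct and takes essentially the same route as the paper's proof: both reduce the global top-$k$ search to $\N_{ij}^+$ by combining the upper bound $b_{ij}$ from Prop.~\ref{prop:incremental_nn_bound} on all outside nodes with the hypothesis that at least $k$ elements of $\N_{ij}^+$ meet that bound. Yours is simply a more explicit write-up of the paper's one-line argument, with a reasonable extra remark on tie-breaking.
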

Both of these approaches for efficiently updating the NN graph after contraction are used in Alg.~\ref{alg:inc-nn}. Additionally in Alg.~\ref{alg:inc-nn} we skip exhaustive search if a node still has $p$-many nearest neighbours where $p \in [1, k)$.
Algorithm~\ref{alg:inc-nn} can be used instead of lines~\ref{alg:gaec_update_nn_q} and~\ref{alg:gaec_update_nn_m} in Alg.~\ref{alg:dense-multicut-gaec} for improved performance.
See Figure~\ref{fig:knn-graph-illustration} for an illustration on nearest neighbour graph and edge contraction update. 
\begin{algorithm2e}
\newcommand\mycommfont[1]{\footnotesize\ttfamily\textcolor{blue}{#1}}
\SetCommentSty{mycommfont}
\caption{\texttt{Incremental NN update}}
\label{alg:inc-nn}
\KwData{
Contracting nodes $i$, $j$;
Contracted node $m$;
NN graph $(V, \A)$;
Node features $f_i, \forall i \in V$;
Num. of neighbours $k$;
}
\KwResult{
Nearest neighbour arcs $H$ to add in $\A$
}
\tcp{NNs of $m$ by Prop.~\ref{prop:incremental_nn_bound}}
$H = \{(m, l) \vert l \in \N_{ij}^+, \la f_m, f_l \ra \geq b_{ij}\}$\;
\tcp{Keep at most $k$ NN}
$H = \argtopk_{(m, l) \in H} \la f_m, f_l \ra$\;
\If{$H = \varnothing$}
{
    $H = \{(m, r) \vert r \in \argtopk_{l \in V \setminus m} \la f_m, f_l\ra\}$\;
    \label{alg:full_search_m}
}
\For{$q \in \N_{ij}^- \setminus \{i, j\}$}
{
    \tcp{Check if $m$ a NN of $q$}
    \If{$\la f_q, f_m \ra \geq \min_{l \in \N_q^+} \la f_q, f_l \ra$}
    {
        $H = H \cup (q, m)$\;
    }
    \Else
    {
    $H = H \cup \{(q, r) \vert r \in \argtopk_{l \in V \setminus q} \la f_q, f_l\ra) \}$\;\label{alg:full_search_q}
    }
}
\end{algorithm2e}

\subsection{Lazy Edge Contraction}
\label{sec:lazy_edge_contraction}
We further forego the need for nearest neighbours recomputation after edge contraction by lifting the restriction of performing only greedy moves. 
This allows to maximally utilize the NN graph: the algorithm performs contractions, including non-greedy ones, until no contraction candidates are present in the NN graph. Specifically we do not perform the exhaustive search in lines~\ref{alg:full_search_m} and~\ref{alg:full_search_q} of Alg.~\ref{alg:inc-nn} and only return the nearest neighbours which are easily computable. The NN graph is repopulated as lazily as possible i.e., when no contraction candidates are left. 
In addition to being more efficient this strategy is reminiscent of the balanced edge contraction approach of~\cite{kardoost2018solving}. The authors normalized the edge costs with cluster size of two end-points. These normalized edge costs were used to find the edge to contract. This strategy encouraged consecutive contractions to occur at different regions of the graph. As our lazy approach does not always make the nearest neighbours of the contracted node available thus contractions can only be done to nodes other than the contracted node. This also produces contractions in different regions. 

Lastly we explore efficient methods for approximate nearest neighbour search~\cite{malkov2018efficient_hnsw} for populating the initial NN graph. For later searches we still use exact methods as the search space is reduced due to contractions.

\subsection{Varying Affinity Strength}
Our basic edge costs computed by $\la f_i, f_j \ra$ for two features $f_i$ and $f_j$ have one fundamental limitation: 
Clusters will by default occupy whole quadrants. 
In other words, whenever two features have angle lower than $90^\circ$ they are attractive and will prefer to be in the same cluster, see Figure~\ref{fig:angle-cluster-illustration}.
In order to let our formulation favor larger or smaller clusters, we modify our original similarity function $s(\cdot,\cdot)$ by adding an additional term indicated by $\alpha$-variables:
\begin{align}
    \overline{f}_i &= [f_i; \alpha_i], \\
    s(\overline{f}_i,\overline{f}_j) &= \la f_i, f_j \ra \pm \alpha_i \cdot  \alpha_j\,,
    \label{eq:extended_features_similarity}
\end{align}
where we choose positive sign for favoring larger clusters and negative for smaller clusters.
In our experiments we will set $\alpha_i = \alpha > 0$, with $-$ in~\eqref{eq:extended_features_similarity} to prefer many small sized clusters.
Moreover, we note that our contraction mechanism carries over directly to this extended setting.
\begin{lemma}
\label{lemma:features_aggregation_appended}
Aggregating features of the contracted node $m$ by $\overline{f}_m = \overline{f}_i + \overline{f}_j$ is equivalent to setting edge costs as per~\eqref{eq:contraction_gaec} on complete graph.
\begin{proof}
Similar to the proof of Lemma~\ref{lemma:contraction_node_features} as follows
\begin{align*}
    s(\overline{f}_m, \overline{f}_l) &= 
    \la f_m, f_l \ra \pm \alpha_m\cdot\alpha_l \\
    &=\la f_i + f_j, f_l \ra \pm (\alpha_i + \alpha_j)\cdot\alpha_l \\
    &= \la f_i, f_l \ra \pm \alpha_i\cdot\alpha_l +
        \la f_j, f_l \ra \pm \alpha_j\cdot\alpha_l \\
    &= s(\overline{f}_i, \overline{f}_l) + s(\overline{f}_j, \overline{f}_l)\,.\qedhere
\end{align*}
\end{proof}
\end{lemma}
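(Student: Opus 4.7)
The plan is to mirror the argument used in Lemma~\ref{lemma:contraction_node_features}, exploiting the fact that the extended similarity $s(\overline{f}_i,\overline{f}_j) = \la f_i, f_j \ra \pm \alpha_i\cdot \alpha_j$ is still bilinear in each of its two arguments. Because bilinearity is preserved under taking sums (the sum of two bilinear forms is bilinear), the exact same ``split along each summand in the first slot'' calculation will go through, and the contracted cost will automatically coincide with the additive update~\eqref{eq:contraction_gaec}.

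Concretely, I would first unpack the definition $\overline{f}_m = \overline{f}_i + \overline{f}_j$ componentwise, noting that the appended stacking $[\,\cdot\,;\,\cdot\,]$ turns this single vector equation into the two scalar/vector equations $f_m = f_i + f_j$ and $\alpha_m = \alpha_i + \alpha_j$. Second, I would substitute $\overline{f}_m$ into $s(\overline{f}_m, \overline{f}_l)$ and use linearity of the inner product in the first argument together with distributivity of scalar multiplication to obtain
\begin{equation*}
s(\overline{f}_m, \overline{f}_l) = \la f_i, f_l \ra + \la f_j, f_l \ra \pm \alpha_i\cdot \alpha_l \pm \alpha_j\cdot \alpha_l.
\end{equation*}
Third, I would regroup the four terms into $s(\overline{f}_i, \overline{f}_l) + s(\overline{f}_j, \overline{f}_l)$ and invoke the identification of $s$ with $c$ under our dense formulation to conclude that this equals $c_{il} + c_{jl}$, which is exactly the GAEC update rule~\eqref{eq:contraction_gaec} applied on the complete graph (where all pairwise costs exist so no zero-padding is needed).

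The main (minor) obstacle is purely bookkeeping: making sure the $\pm$ sign is chosen consistently on both sides of the identity, and checking that the argument does not depend on whether $\alpha_i$ is a scalar or a vector, since the lemma is stated in the scalar case we later specialize to $\alpha_i = \alpha$ but the proof should be written so that it works whenever the $\alpha$-part of the similarity is itself bilinear. No additional structural assumption on $f$ or $\alpha$ beyond bilinearity of $s$ is needed, so there is no real mathematical difficulty beyond adapting one line of the earlier proof.
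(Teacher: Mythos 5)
Your proposal is correct and follows essentially the same route as the paper's proof: unpack $\overline{f}_m = \overline{f}_i + \overline{f}_j$ into $f_m = f_i + f_j$ and $\alpha_m = \alpha_i + \alpha_j$, expand $s(\overline{f}_m, \overline{f}_l)$ by bilinearity, and regroup the terms into $s(\overline{f}_i, \overline{f}_l) + s(\overline{f}_j, \overline{f}_l)$. Your added remarks on sign consistency and on the argument only requiring bilinearity of the $\alpha$-term are sensible but do not change the substance.
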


\paragraph{Large clusters:}
For preferring larger clusters (corresponding to choosing $+$ in~\eqref{eq:extended_features_similarity}), we work directly on the extended feature set $\overline{f}_i = [f_i; \alpha_i]$ and use it in the NN graph.

\paragraph{Small clusters:}
For preferring smaller clusters (corresponding to choosing $-$ in~\eqref{eq:extended_features_similarity}), we must modify our algorithms slightly.
In order to construct NN graphs we will use two sets of features:
First, the query nodes will have their features defined by $\hat{f}_i = [f_i, -\alpha_i]$ and
second, the pre-existing nodes $j \in V$ in the graph will keep the same features $\overline{f}_j$ from~\eqref{eq:extended_features_similarity}.
To search for nearest neighbors of node $i$ in the graph $V$ the modified similarity function~\eqref{eq:extended_features_similarity} can be implemented by an inner product as
\begin{equation}
    s(\overline{f}_i, \overline{f}_j) = \la \hat{f}_i, \overline{f}_j \ra\,.
    \label{eq:extended_features_similarity_ip}
\end{equation}

\begin{figure}
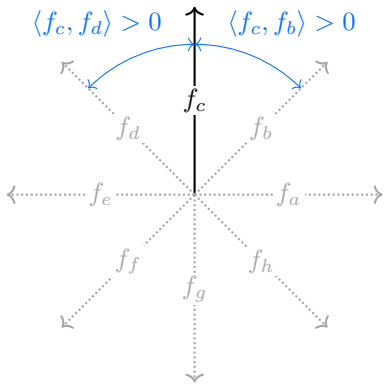

    \centering
    \scalebox{1.0}{\includestandalone{figures/attraction_repulsion_example}}
    \caption{Illustration of edge costs between $8$ nodes where feature vectors of each node $i$ is in two-dimensional space i.e., $f_i \in \R^2$. If we want each node to be a separate cluster then the edge costs measured by~\eqref{eq:inner_product_similarity} are not suitable. This is because there will always be atleast two vectors with positive costs preferring to be in the same cluster. Using a large enough positive value of $\alpha$ with a $-$ in ~\eqref{eq:extended_features_similarity} this issue can be resolved.
    }
    \label{fig:angle-cluster-illustration}
\end{figure}

\subsection{Computational complexity}
Our basic formulation~\eqref{eq:dense_multicut} with $\alpha=0$ in~\eqref{eq:extended_features_similarity} is shown to be solvable in time $\mathcal{O}(\abs{V}^{d^2})$ in~\cite{veldt2017correlation_lowrank} through zonotope vertex enumeration~\cite{onn2001vector_partitioning, stinson2016randomized_zonotope_enumeration}. These results are also applicable when choosing $+$ in~\eqref{eq:extended_features_similarity}. In our experiments having $-$ in~\eqref{eq:extended_features_similarity} is vital for obtaining a good clustering, in which case the results of~\cite{veldt2017correlation_lowrank} are not directly transferable.

\section{Experiments}
\label{sec:experiments}
We study the benefits of the multicut on complete graphs~\eqref{eq:dense_multicut} and compare possible algorithms on the tasks of ImageNet~\cite{deng2009imagenet} clustering and Cityscapes~\cite{cordts2016cityscapes} panoptic segmentation. All datasets are made available in~\cite{swoboda2022structured}.
The algorithms are
\begin{description}
\item[\alg{GAEC}:]
The greedy additive edge contraction algorithm from~\cite{keuper2015efficient}(Alg.~\ref{alg:gaec}) is run on the complete graph where all edge costs are precomputed and then passed to the algorithm.
\item[\alg{RAMA}:]
We also compare with the recent GPU-based multicut solver of~\cite{rama_22}. Similar to \alg{GAEC} we run it on the complete graph. The solver uses dual optimization for better solution quality and also gives lower bounds to the multicut objective~\eqref{eq:multicut}. As a drawback it cannot handle large instances due to high memory requirement of complete graphs. We evaluate on NVIDIA A40 GPU with 48GB of memory. 
\item[\alg{DGAEC}:]
Our Algorithm~\ref{alg:dense-multicut-gaec} which operates on node features and performs contractions according to Lemma~\ref{lemma:contraction_node_features}. The nearest neighbour graph is updated by exhaustive search after edge contraction. The number of nearest neighbours $k$ is set to $1$, a larger value does not benefit since Prop.~\ref{prop:incremental_nn_bound} is not utilized.
\item[\alg{DGAECInc}:]
Our Algorithm~\ref{alg:dense-multicut-gaec} which additionally makes use of Alg.~\ref{alg:inc-nn} for incremental neighbour updates after edge contraction. The value of $k$ is set to $5$. 
\item[\alg{DLAEC}:]
A variant of our \alg{DGAECInc} where non-greedy moves are also allowed as described in Sec.\ \ref{sec:lazy_edge_contraction}.
\item[\alg{DAppLAEC}:]
Another variant of our \alg{DLAEC} where initial nearest neighbours are computed by approximate nearest neighbour search method~\cite{malkov2018efficient_hnsw} through library of~\cite{johnson2019_faiss}. 
\end{description}
For all multicut algorithms on all datasets we set the value of affinity strength $\alpha_i$ in ~\eqref{eq:extended_features_similarity_ip} to $0.4$, preferring small clusters. We do not compare with the randomized algorithm of~\cite{veldt2017correlation_lowrank} since it does not account for affinity strength with preference on smaller clusters. All CPU algorithms are run on an AMD 7502P CPU with a maximum of 16 threads to allow for faster nearest neighbour search. 
\subsection{ImageNet clustering}
We evaluate clustering of the ImageNet~\cite{deng2009imagenet} validation set containing $50k$ images.
Each image in the dataset acts as a node for our dense multicut formulation. The features of each image are computed by a ResNet50~\cite{he2016resnet} backbone trained by \mbox{MoCov3}~\cite{chen2021mocov3} in unsupervised fashion by a constrastive loss on the training split of ImageNet.
The features have a dimension of $2048$ and are normalized to have unit $L_2$ norm.
We create two problem instances containing $5k$ and $50k$ images by considering $100$ and all $1000$ classes respectively. 

\paragraph{Clustering quality:}
Before comparing our algorithmic contributions we first test the efficacy of our dense multicut formulation by comparing its clustering result with $k$-means~\cite{lloyd_kmeans} using the implementation from~\cite{scikit-learn} and initialization of~\cite{arthur07_kmeans++}. Since $k$-means requires the number of clusters to be known beforehand we set it to the number of classes in the problem instance. For an additional comparison we also run $k$-means on the number of clusters given by our dense multicut algorithm. The quality of clustering results are evaluated using normalized mutual information (NMI) and adjusted mutual information (AMI) metrics~\cite{vinh10a_nmi}. The results are given in Table~\ref{tab:imagenet_clustering}. 
We observe that although our formulation does not require the number of clusters to be specified, the results are on par with $k$-means. Additionally the value of affinity strength $\alpha$ does not need to be changed for different problem instances. As compared to $k$-means our algorithms are much faster especially on the larger instance. The \alg{RAMA} solver of~\cite{rama_22} performs better than all other approaches on the smaller instance but runs out of memory for the larger one.
Lastly, our formulation creates more clusters than the number of classes. This is mainly due to presence of outliers in the feature space as the feature extractor is trained without any groundtruth information.  

\paragraph{Algorithms comparison:}
We compare different algorithms for solving dense multicut problem~\eqref{eq:dense_multicut} for imageNet clustering in Table~\ref{tab:imagenet_algorithms}. Firstly, we see that on the smaller instance the GPU based solver \alg{RAMA}~\cite{rama_22} gives the best performance. Secondly using incremental nearest neighbour search through Alg.~\ref{alg:inc-nn} gives better run time than exhaustive search. Lastly our non-greedy algorithms give the best run time among all CPU-based algorithms although with slightly worse objectives. 

On the smaller instance, \alg{RAMA} outperforms other algorithms in terms of the objective value~\eqref{eq:dense_multicut} and also gives better clustering quality as compared to $k$-means.
As a drawback \alg{RAMA} cannot handle large dense multicut instances.
This shows multicut on complete graphs can be a suitable alternative to $k$-means.
We speculate that algorithmic improvements on top of our proposed algorithms will further improve clustering quality for large graphs.

\begin{table}
\caption{Quality of clustering on ImageNet validation set. t [s]: compute time in seconds, NMI: normalized mutual information, AMI: adjusted mutual information, \# clusters: number of clusters, $\dagger$: out of GPU memory. For $k$-means the number of clusters was specified as input. }
\label{tab:imagenet_clustering}
\begin{center}
\begin{tabular}{l r r r r}
\toprule
Method & t [s] $\downarrow$ & NMI $\uparrow$ & AMI $\uparrow$ & \# clusters \\
\midrule 
\multicolumn{5}{c}{\textit{ImageNet-$100$} $(\abs{V} = 5k)$} \\
\midrule
$k$-means & 16 & 0.42 & 0.27 & 100 \\ 
$k$-means & 32 & 0.53 & 0.26 & 333 \\ 
\alg{RAMA} & \textbf{0.9} & \textbf{0.57} & \textbf{0.29} & 639 \\ 
\alg{DGAECInc} & 42 & 0.43 & 0.22 & 343 \\ 
\alg{DAppLAEC} & 3.2 & 0.47 & 0.26 & 333 \\ 

\midrule 
\multicolumn{5}{c}{\textit{ImageNet-$1000$} $(\abs{V} = 50k)$} \\
\midrule
$k$-means & 701 & 0.54 & 0.2 & 1000 \\ 
$k$-means &  1801 & \textbf{0.61} & 0.19 & 2440 \\ 
\alg{RAMA} & $\dagger$ & $\dagger$ & $\dagger$ & $\dagger$ \\ 
\alg{DGAECInc} & 2964 & 0.49 & 0.19 & 2488 \\ 
\alg{DAppLAEC} & \textbf{65} & {0.56} & \textbf{0.26} & 2440 \\ 

\bottomrule
\end{tabular}
\end{center}
\end{table}    

\begin{table}[h]
\caption{Comparison of algorithms for solving dense multicut problem on two splits of Imagenet validation set. t [s]: compute time in seconds, Obj: objective value of clustering~\eqref{eq:dense_multicut}, $\dagger$: out of GPU mem.\, $\star$: no result within $3$ hours.}
\label{tab:imagenet_algorithms}
\begin{center}
\begin{tabular}{l r r r r r}
\toprule
 & \multicolumn{2}{c}{\textit{ImageNet-$100$}} & \multicolumn{2}{c}{\textit{ImageNet-$1000$}} \\
\cmidrule(lr){2-3} \cmidrule(lr){4-5}
Method & t [s] $\downarrow$ & Obj $\downarrow$ & t [s] $\downarrow$ & Obj $\downarrow$ \\
 \midrule
\alg{GAEC} & 4.5 & -{6.84e5} & 552 & -\textbf{9.353e7} \\
\alg{RAMA} & \textbf{0.9} & -\textbf{6.95e5} & $\dagger$ & $\dagger$ \\
\alg{DGAEC} & 132 & -{6.84e5} & $\star$ & $\star$ \\ 
\alg{DGAECInc} & 42 & -{6.84e5} & 2934 & -\textbf{9.353e7} \\ 
\alg{DLAEC} & 5 & -6.83e5 & 341 & -9.332e7 \\ 
\alg{DAppLAEC} & {3.2} & -6.83e5 & \textbf{65} & -9.332e7 \\ 
\bottomrule
\end{tabular}
\end{center}
\end{table}

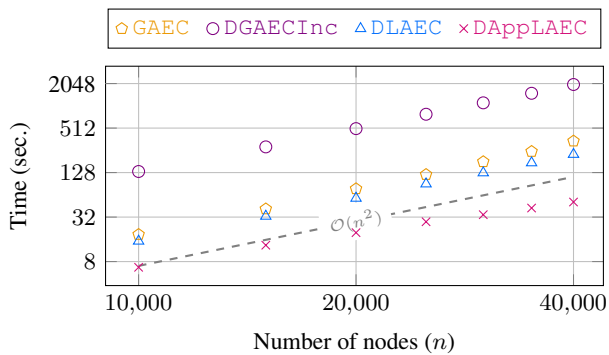
\begin{figure}[h]
\begin{center}
\begin{tikzpicture}[font=\small]
\begin{axis}[
        width=8.2cm,
        height=4.5cm,
        xmajorgrids,
        ymajorgrids,
        xlabel={Number of nodes ($n$)},
        ylabel={Time (sec.)},
        ylabel near ticks,
        xlabel near ticks,
        xmode=log,
        log basis x = {2},        
        ymode=log,
        log basis y = {2},
        xmin = 9000,
        xmax = 44000,
        mark size=0.5ex,
        log ticks with fixed point,
        legend pos=north east,
        clip marker paths=true,
        xtick={5000, 10000, 20000, 40000},
        scaled y ticks=false,
        ytick={8, 32, 128, 512},
        extra y ticks={2048},
        extra y tick labels={2048},
        legend style={at={(0.5,1.08)},
        legend cell align={center},
	    anchor=south,legend columns=5},
    ]
    
    \draw[dashed, gray, thick] (10000, 7) -- node[fill=white,midway,sloped,font=\tiny]{$\mathcal{O}(n^2)$} (40000, 112);

    
    \addlegendimage{only marks, mark=pentagon, mark options={color=scattercolor7}}
    \addlegendentry{\textcolor{scattercolor7}{\normalfont\texttt{GAEC}\enspace}}

    \addlegendimage{only marks, mark=o, mark options={color=scattercolor1}}
    \addlegendentry{\textcolor{scattercolor1}{\normalfont\texttt{DGAECInc}\enspace}}

    \addlegendimage{only marks, mark=triangle, mark options={color=scattercolor6}}
    \addlegendentry{\textcolor{scattercolor6}{\normalfont\texttt{DLAEC}\enspace}}

    \addlegendimage{only marks, mark=x, mark options={color=scattercolor5}}
    \addlegendentry{\textcolor{scattercolor5}{\normalfont\texttt{DAppLAEC}}}
    
    \addplot[only marks, mark = pentagon, color=scattercolor7] table [
        col sep=comma, 
        skip first n=1,
        x index=0, y index=1
    ] {figures/emp_complexity_analysis/inet_comp_cpu20_adj_matrix.csv};

    \addplot[only marks, mark = o, color=scattercolor1] table [
        col sep=comma, 
        skip first n=1,
        x index=0, y index=1
    ] 
    {figures/emp_complexity_analysis/inet_comp_cpu20_gaec_inc_3.csv};

    \addplot[only marks, mark = triangle, color=scattercolor6] table [
        col sep=comma, 
        skip first n=1,
        x index=0, y index=1
    ] {figures/emp_complexity_analysis/inet_comp_cpu20_laec_3.csv};

    \addplot[only marks, mark = x, color=scattercolor5] table [
        col sep=comma, 
        skip first n=1,
        x index=0, y index=1
    ] {figures/emp_complexity_analysis/inet_comp_cpu20_laec_bf_later_3.csv};

\end{axis}
\end{tikzpicture}
\caption{Runtime comparison of our algorithms on instances of varying sizes taken from ImageNet val.\ set by varying number of classes. Both axes are in log-scale. Our algorithms \alg{DGAECInc} and \alg{DLAEC} show quadratic complexity. Our algorithm \alg{DAppLAEC} behaves subquadratically benefiting from approximate nearest neighbour search.
}
\label{fig:emp_comp_analysis}
\end{center}
\end{figure}

Lastly we perform empirical time complexity analysis of our algorithms showing quadratic and subquadratic behaviour in Figure~\ref{fig:emp_comp_analysis}. More details are provided in the Appendix. 
\subsection{Panoptic segmentation}
We evaluate our method on the task of panoptic segmentation~\cite{kirillov2019panoptic_quality} on the Cityscapes dataset~\cite{cordts2016cityscapes}. The panoptic segmentation task consists of assigning a class label to each pixel and partitioning different instances of classes with object categories (e.g. car, person etc.). We focus on the task of partitioning for which the multicut formulation~\eqref{eq:multicut} has been used by~\cite{kirillov2017instancecut,abbas2021cops}. The latter work used a carefully crafted graph structure. Our dense multicut~\eqref{eq:dense_multicut} formulation foregoes the need for finding a suitable graph structure. 
We use the pretrained Axial-ResNet50~\cite{wang2021max} network from~\cite{yu2022kmeans_mask_transformer}, made available by~\cite{deeplab2_2021} to compute the node features. Specifically, the network computes $L_2$-normalized, $128$-dimensional and $4\times$ downsampled features in its intermediate stages which we use for our study without any training. 

For our evaluation we first compute semantic class predictions and then create a dense multicut instance for each semantic category with objects (i.e., car, person etc.). Such classes are also known as \textit{thing} classes. 
The goal of the multicut problem is then to partition all nodes belonging to same semantic class to different objects. This strategy creates a total of $1631$ dense multicut problem instances of varying sizes from $500$ images of the Cityscapes validation set. The largest problem instance contains around $43k$ nodes. To upsample the clustering back to original image resolution we interpolate the node features back to input image resolution. Afterwards each upsampled node is assigned to the cluster whose mean feature embedding is most similar.  

\paragraph{Clustering quality:}
\begin{table}
\caption{Comparison of panoptic segmentation on Cityscapes dataset. Multicut on sparse graph of~\cite{abbas2021cops} is computed by Alg.~\ref{alg:gaec}. For dense multicut we use the \alg{DAppLAEC} algorithm. $PQ_{th}$: Average panoptic quality of all \textit{thing} classes.}
\label{tab:cityscapes_panoptic_quality}
\begin{center}
\begin{tabular}{l r r}
\toprule
& \multicolumn{2}{c}{Panoptic quality $(\%) \uparrow$} \\
\cmidrule(lr){2-3}
Category & Sparse multicut & Dense multicut \\ 
\midrule
\textit{Person} & 40.0 & \textbf{46.9} \\
\textit{Rider} & 53.0 & \textbf{54.4} \\ 
\textit{Car} & 50.7 & \textbf{60.5} \\ 
\textit{Truck} & \textbf{52.7} & 52.3 \\ 
\textit{Bus} & \textbf{72.1} & 71.1 \\ 
\textit{Train} & \textbf{65.6} & 62.9 \\ 
\textit{Motorcycle} & \textbf{47.0} & 46.8 \\ 
\textit{Bicycle} & 45.7 & \textbf{46.9} \\ 
\midrule
$PQ_{th}$ & 53.3 & \textbf{55.2} \\ 
\bottomrule
\end{tabular}
\end{center}
\end{table}
As a first point of comparison we check whether formulating a multicut problem on the complete graph by~\eqref{eq:dense_multicut} is beneficial as compared to a handcrafted sparse graph structure.
We take the sparse graph structure from~\cite{abbas2021cops} as a baseline. Their graph also includes long-range edges for dealing with occlusions leading to about $10\cdot\abs{V}$ edges in total. We compute the edge costs in this sparse graph in the same way as for our dense formulation and use Alg.~\ref{alg:gaec} for computing multicut.

In Table~\ref{tab:cityscapes_panoptic_quality} we compare the quality of clustering through the panoptic quality metric~\cite{kirillov2019panoptic_quality}. 
We observe that our dense multicut formulation performs better than multicut on the sparse handcrafted graph.
This improvement is significant for classes which can have many instances of the same class within an image (i.e.\ person, car) thus making the partitioning problem difficult.
For classes with large objects (e.g.\ truck) having more edges does not help since the sparse graph can already capture most inter-pixel relations.
On average our dense multicut formulation gives better results than sparse multicut while alleviating the need for designing a graph structure. 

\paragraph{Algorithms comparison:}
\begin{table}[h]
\caption{Comparison of algorithms for solving dense multicut problem on Cityscapes validation set. (t [s]): average compute times in seconds, (Obj): average objective value of clustering~\eqref{eq:dense_multicut}.}
\label{tab:cityscapes_algorithms}
\begin{center}
\begin{tabular}{l r r }
\toprule
Method & t [s] $\downarrow$ & Obj ($\times 10^6$) $\downarrow$ \\
\midrule
\alg{GAEC} & 7.7 & -6.338 \\
\alg{DGAEC} & 84.1 & -6.338 \\ 
\alg{DGAECInc} & 3.2 & -6.338 \\ 
\alg{DLAEC} & 2.1 & -6.340 \\ 
\alg{DAppLAEC} & \textbf{1.5} & -\textbf{6.341} \\ 
\bottomrule
\end{tabular}
\end{center}
\end{table}
We compare dense multicut algorithms for the panoptic segmentation task in terms of objective value and run time.
We were not able to run \alg{RAMA} since the GPU could not store large graphs. The comparison of performance to the remaining algorithms averaged over all problem instances is given in Table~\ref{tab:cityscapes_algorithms}.

In terms of run time, we see that our most naive algorithm $\alg{DGAEC}$ is slower than $\alg{GAEC}$ which directly operates on edge costs. 
Our other algorithms surpass $\alg{GAEC}$ reaching up to an order of magnitude run time improvement with lazy edge contractions and approximate initial nearest neighbours search.
In terms of objective value we see slight improvement by our lazy contraction algorithms as compared to the greedy ones.


\paragraph{Sensitivity of affinity strength:}
In Table~\ref{tab:cityscapes_att_rep_strength} we study the effect of changing the value of $\alpha$ from~\eqref{eq:extended_features_similarity}. The results highlight that having $\alpha > 0$ is essential for good clustering quality.
Last, we see further improvement if the value of $\alpha$ is set differently for each semantic class. We refer to the Appendix for further results.
\begin{table}[h]
\caption{Results of panoptic segmentation via dense multicut with different values of attraction/repulsion strength $\alpha$ in~\eqref{eq:extended_features_similarity}. $PQ_{th}$: Avg.\ panoptic quality over all \textit{thing} classes.}
\label{tab:cityscapes_att_rep_strength}
\begin{center}
\begin{tabular}{l | r r r r r r r}
\toprule
$\alpha$ & $0.2$ & $0.3$ & $0.4$ & $0.5$ & $0.6$ & $0.7$ & $0.8$ \\ 
\midrule
$PQ_{th}$ & 54.5  & \textbf{55.8} & 55.2 & 55.0 & 54.1 & 52.0 & 49.3 \\
\bottomrule
\end{tabular}
\end{center}
\end{table}

\section{Conclusion}
\label{sec:conclusion}
We have demonstrated that optimizing multicut on large complete graphs is possible when using factorized edge costs through inner products of features.
We speculate that further algorithmic improvements are possible e.g.\ by performing dual optimization directly on the node features.

As a potential theoretical advantage our approach sidesteps the need for learning graph structure. This offers a possibility to embed it as a differentiable layer in neural networks, using e.g.\ the work~\cite{vlastelica2019differentiation}.

\section{Acknowledgements}
We are grateful to all reviewers especially reviewer TfhS for their helpful suggestions. We also thank Jan-Hendrik Lange for discussion about related works.

\bibliography{references}
\bibliographystyle{icml2023}

\newpage
\appendix
\onecolumn

\begin{center}
    \textbf{\Large Appendix}
\end{center}
\section{Time complexity analysis}
Theoretically all of our algorithms have asymptotic time complexity of $\mathcal{O}(d\cdot|V|^3)$. However, empirically we observe our algorithms show quadratic behaviour and get faster through Prop.~\ref{prop:incremental_nn_bound} and lazy contractions. We analyse empirical complexity of our algorithms in Figure~\ref{fig:emp_comp_analysis}. More details about worst-case complexity of our algorithms are as under
\begin{description}
\item[\alg{GAEC}:] In worst case scenario Alg.~\ref{alg:dense-multicut-gaec}, the set $\N_{ij}^-$ in line~\ref{alg:gaec_update_nn_q} can be $V \setminus \{i, j\}$. Therefore nearest neighbour search in line~\ref{alg:gaec_update_nn_q} has complexity $\mathcal{O}(d\cdot|V|^2)$ making each edge contraction operation quadratic. Overall complexity of the algorithm will then be $\mathcal{O}(d\cdot|V|^3)$.
\item[\alg{DGAEC}:] In worst case scenario Prop.~\ref{prop:incremental_nn_bound} might not help requiring exhaustive search for nearest neighbours. Thus asymptotic complexity remains same as \alg{DGAEC}.
\item[\alg{LAEC}:] Worst case complexity remains same as \alg{DGAEC}.
\item[\alg{DAppLAEC}:] Here we use approximate nearest neighbour search~\cite{malkov2018efficient_hnsw} but only to populate initial set of nearest neighbours i.e., the most costly operation. For later iterations we still use exhaustive search therefore asymptotic complexity remains the same as \alg{DLAEC}. 
\end{description}

Note that above complexity analysis assumes that the number of nearest neighbours $k$ is set to 1. This offers very limited potential for incremental nearest neighbour updates. A larger value of $k$ gives much speedup due to Alg.~\ref{alg:inc-nn}. A case distinction is provided below
\begin{description}
    
\item[$k=1$:]
Assume the $k$-nearest neighbour graph with $k = 1$ before edge contraction has the structure: $V = \{1, 2, ..., n\}$, $\mathcal{A} = \{(2, 1), (3, 1), ... (n, 1)\}$. Thus node $1$ is the nearest neighbour of all other nodes. If an edge containing node $1$ is contracted it will force all other nodes to recompute their nearest neighbours. Note that there are still $\mathcal{O}(|V|)$ many remaining nodes requiring nearest neighbour update. Due to this worst-case scenario time complexity of one edge contraction becomes quadratic making overall runtime cubic in the number of nodes.

\item[$k \gg 2$:]
Assume the node set after contracting an edge $ij$ is $V' := V\setminus \{i, j\}$. Then each node in $V'$ still has $k-2$ many nearest neighbours from within $V'$. In this case nearest neighbour queries only need to be performed between the merged node and nodes in $V'$. In such case an edge contraction operation can have linear complexity instead of quadratic in the number of nodes. Since we use a value of $k \in [1, 5]$ in all our algorithms utilizing incremental updates, they show such behaviour. This is also demonstrated in empirical analysis from Figure~\ref{fig:emp_comp_analysis}.  
\end{description}

\section{Influence of affinity strength}
On the Cityscapes dataset we compare panoptic quality on different object classes by varying the value of affinity strength $\alpha$ in 
~\eqref{eq:extended_features_similarity_ip}. The results are given in Table~\ref{tab:cityscapes_panoptic_quality_classes}. We observe that for classes contain many small objects large value of $\alpha$ is suitable whereas for classes with large objects small value of $\alpha$ is preferable. Although our default value of $0.4$ already makes dense multicut outperform the baseline, further improvement is still possible e.g. by tuning $\alpha$.
\begin{table}[h]
\caption{Comparison of panoptic segmentation on Cityscapes dataset for different values of affinity strength $\alpha$~\eqref{eq:extended_features_similarity_ip}. All results are computed using the \texttt{DAppLAEC} algorithm. Largest values in each row are highlighted with bold.}
\label{tab:cityscapes_panoptic_quality_classes}
\begin{center}
\begin{tabular}{l r r r r r r r r r}
\toprule
& \multicolumn{8}{c}{Panoptic quality on varying values of $\alpha$} \\
\cmidrule(lr){2-10}
Category & 0.1 & 0.2 & 0.3 & 0.4 & 0.5 & 0.6 & 0.7 & 0.8 & 0.9 \\
\midrule
\textit{Person} &  31.5 & 38.1 & 43.2 & 46.9 & 49.8 & 52.6 & 54.3 & \textbf{55.0} & 52.4 \\
\textit{Rider} & 51.1 & 53.0 & 53.9 & 54.5 & \textbf{55.5} & 55.4 & 53.9 & 51.0 & 45.5 \\
\textit{Car} & 45.6 & 52.9 & 57.8 & 60.5 & 63.3 & \textbf{64.8} & 64.1 & 62.2 & 57.8 \\
\textit{Truck} & \textbf{54.1} & 53.7 & 52.7 & 52.3 & 49.0 & 47.8 & 45.4 & 41.5 & 34.7 \\
\textit{Bus} & \textbf{75.1} & 74.2 & 73.5 & 71.2 & 69.3 & 63.6 & 58.5 & 54.5 & 47.3 \\
\textit{Train} & \textbf{75.0} & 74.9 & 71.5 & 62.9 & 56.3 & 51.7 & 45.1 & 40.4 & 32.3 \\
\textit{Motorcycle} & 45.5 & 46.1 & 48.0 & 46.8 & 48.7 & \textbf{49.1} & 47.8 & 45.2 & 39.8 \\
\textit{Bicycle} & 38.1 & 43.2 & 45.6 & 46.9 & 47.8 & \textbf{48.0} & 46.9 & 44.6 & 40.4 \\
\midrule
Average ($PQ_{th}$) & 52.0 & 54.5 & \textbf{55.8} & 55.2 & 55.0 & 54.1 & 52.0 & 49.3 & 43.8 \\
\bottomrule
\end{tabular}
\end{center}
\end{table}


\end{document}